%%%%%%%%%%%%%%%%%%%% author.tex %%%%%%%%%%%%%%%%%%%%%%%%%%%%%%%%%%%
%
% sample root file for your "contribution" to a contributed volume
%
% Use this file as a template for your own input.
%
%%%%%%%%%%%%%%%% Springer %%%%%%%%%%%%%%%%%%%%%%%%%%%%%%%%%%

% RECOMMENDED %%%%%%%%%%%%%%%%%%%%%%%%%%%%%%%%%%%%%%%%%%%%%%%%%%%
\documentclass{svproc}

% choose options for [] as required from the list
% in the Reference Guide

\usepackage{type1cm}        % activate if the above 3 fonts are
                            % not available on your system
%
\usepackage{makeidx}         % allows index generation
\usepackage{graphicx}        % standard LaTeX graphics tool
                             % when including figure files
\usepackage{multicol}        % used for the two-column index
\usepackage{subcaption}
\usepackage{multirow}

\usepackage[varvw]{newtxmath}       % selects Times Roman as basic font

% see the list of further useful packages
% in the Reference Guide

%\makeindex             % used for the subject index
                       % please use the style svind.ist with
                       % your makeindex program

%%%%%%%%%%%%%%%%%%%%%%%%%%%%%%%%%%%%%%%%%%%%%%%%%%%%%%%%%%%%%%%%%%%%%%%%%%%%%%%%%%%%%%%%%
%%%%%%%% New math mappings

\newcommand{\real}{\mathbb{R}}

\newcommand{\GG}{\mathcal{G}}
\newcommand{\VV}{\mathcal{V}}
\newcommand{\EE}{\mathcal{E}}
\newcommand{\NN}{\mathcal{N}}

\newcommand{\LL}{\mathcal{L}}
\newcommand{\diag}{{\mathrm{diag}}}

\newtheorem{assumption}{Assumption}

%%%%%%% comments commands
\usepackage{todonotes}

\begin{document}
\mainmatter              % start of a contribution
\title{
Proportional Control for Stochastic Regulation on
Allocation of Multi-Robots
}
\titlerunning{Control for Stochastic Regulation on Allocation of
Multi-Robots}

\author{Thales C. Silva \and Victoria Edwards \and M. Ani Hsieh
 }
%\author{Ivar Ekeland\inst{1} \and Roger Temam\inst{2}
%Jeffrey Dean \and David Grove \and Craig Chambers \and Kim~B.~Bruce \and
%Elsa Bertino}
%
\authorrunning{Thales C. Silva et al.} % abbreviated author list (for running head)
%
%%%% list of authors for the TOC (use if author list has to be modified)
\tocauthor{Thales C. Silva, Victoria Edwards, and M. Ani Hsieh
 }

\institute{Thales C. Silva 
\at 
Department of Mechanical Engineering and Applied Mechanics,
University of Pennsylvania, Philadelphia,
USA.
\email{scthales@seas.upenn.edu}
}

\institute{University of Pennsylvania,
Department of Mechanical Engineering and Applied Mechanics, 
Philadelphia, PA 19104, USA\\
\email{scthales@seas.upenn.edu}
}

\maketitle 

%\institute{Thales C. Silva 
%\at 
%Department of Mechanical Engineering and Applied Mechanics, University of Pennsylvania, Philadelphia,
%USA.
%\email{scthales@seas.upenn.edu}
%}
%
%\maketitle

\begin{abstract}
Any strategy used to distribute a robot ensemble over a set of sequential tasks is subject
to inaccuracy due to robot-level
uncertainties and environmental influences
on the robots' behavior. 
We approach the problem of inaccuracy during
task allocation by modeling and controlling the overall
ensemble behavior. 
Our model represents the allocation problem as a stochastic
jump process and we regulate the mean and variance of such a process. 
The main contributions of this paper are:
Establishing a structure for the transition rates 
of the equivalent stochastic jump process and formally showing 
that this approach 
leads to decoupled parameters that allow us to adjust the
first- and second-order moments of
the ensemble distribution over tasks,
which gives the
flexibility to decrease the variance
in
the desired final distribution.
This allows us to directly shape the impact of uncertainties on the
group
allocation over tasks.
We introduce a detailed procedure to design the gains to achieve the desired mean and show how the additional parameters impact the
covariance matrix, which is directly associated with 
the degree of task allocation precision.
Our simulation and experimental results illustrate the
successful control of several
robot ensembles
during task allocation.
\keywords{multi-robot system, control, task-allocation}
\end{abstract}

\section{Introduction}
\label{intro}
% What is the problem
%Ensembles of robotic teams allow for a class of modeling
%and control methods that address problems
%with collective behavior design and scalability (please, see \cite{Elamvazhuthi_2019} and references
%therein). 
Modeling an ensemble of robots
as an aggregate dynamical system
offers flexibility in the
task assignment and is an alternative to
traditional bottom-up task allocation methods, which 
usually are computationally expensive 
(see \cite{Elamvazhuthi_2019,gerkey2004} and references
therein). Given a desired allocation of robots to tasks, each robot must navigate, handle dynamic constraints, and interact with the environment to achieve the desired allocation while meeting some desired team-level performance specifications.  It is well understood that uncertainties resulting from the
robot's interactions with the environment, execution of its assigned tasks, and noise from its own sensors and actuators might lead to
several issues 
during task allocation  
({\it e.g.,} performance loss and inaccuracy)
\cite{Nam15,Nam17}.  This is because allocations are often computed prior to execution and do not account for uncertainties that arise during runtime. 
In addition, analysis of the team-level performance of swarms has shown that local 
(and sometimes small) deviations from
the required performance
at the robot level can combine and
propagate,
leading to substantial changes in the behavior 
of the whole group, which might result in loss of
performance for the entire team (see Section 2 in \cite{Hamann2012}).
Consequently, addressing the fundamental question of
how to properly represent and design robot ensemble behaviors
to meet the desired performance requirements can help us to improve our
understanding of such 
complex systems.

An existing category of ensemble models uses stochastic processes to represent 
robot teams 
\cite{Elamvazhuthi2021,hsieh2008biologically,Bill2011}.
These models use random variables to describe the
population of robots performing tasks, whereas transitions between tasks are described
based on jump processes with random arrival times.
Using these representations
it is possible to
design stable equilibria that reflect desired distributions
of the robots over tasks \cite{berman2009,Deshmukh2018}.
In addition, it is also possible
to account for and to incorporate heterogeneous sensing and mobility capabilities in a robot team using these models
\cite{prorok2017impact,ravichandar2020}.
One of the major advantages of these macroscopic representations is their ability to scale to larger team sizes since their complexity is solely a function of the number of tasks.  In contrast, the complexity of traditional task allocation
methods grows as the number of agents and tasks increases.

Nevertheless, most macroscopic approaches 
are accurate for 
an asymptotically large number 
of agents,
{\it i.e.,} when $N \rightarrow \infty$, where $N$ represents the number
of robots in the team.  Fundamentally, within this assumption is the notion that 
no individual robot's deviations from its desired performance will greatly impact the team's performance as a whole.
%which helps maintain the random variables between agents
%as
%independent and identically distributed.
And yet Tarapore {\it et al.,} \cite{Tarapore2020} recently discussed the challenges associated with employing large teams of robots in real world applications. Considering the impracticability of dense swarms in several applications,
they propose
{\it sparse swarms} 
arguing that guiding the research toward low-density
swarms is more relevant to real-world applications.
One of the goals of this paper is to address the large
number assumption by providing a
method to regulate the variance of a robot distribution in the task allocation problem. 
Specifically, by explicitly controlling the mean of the robot distribution
and independently controlling its variance
we can employ our methodology
in teams with a relatively small number of
agents, that is, we decrease the impact
of individual robot deviations on the
team's performance.
Hence, our methodology is a stride
towards
accurately modeling the allocation dynamics of these {\it sparse swarms}. While similar techniques exist \cite{Bill2012}, the proposed strategy lacks the ability to simultaneously control both the first- and second-order moments. Instead, we leverage
the work from Klavins 
\cite{klavins2010proportional}, which considers a 
{\it stochastic hybrid system} to model and control the mean and variance of molecular species concentrations.
We extend this result to robot swarm applications,
which fundamentally alter some of the assumptions in 
\cite{klavins2010proportional}, for example, in our setting
we are concerned about
teams of constant size without a set of
continuous variables.
In robotics, Napp {\it et al.,} \cite{Napp2011}
presented an integral
set point regulation technique, 
employed to regulate the mean of the robot population performing specific 
tasks. While they
highlighted 
the importance of considering variance
control,
since the
mean can be a
sensitive control objective, 
they 
{do not} propose
a methodology to adjust
the variance.

In this paper, we present a strategy to govern the 
mean and variance of the distribution of robots across a set of tasks.
The contributions of this work include
1) a decoupled formulation of the first- and 
second-order moments for an ensemble model of robot teams
that allows their adjustment individually, and 
2) systematic methods to determine the control gains
to achieve the desired distributions of robots over tasks. 
Our experimental results indicate 
that our proposed macroscopic techniques allows for improved control of small number teams.
%compared to existing macroscopic strategies.

%Existing methods will reach steady state but will still see small population fluctuation due to the nature of the model. 

%\thales{Robots are highly engineered with (almost) arbitrary number of sensors. What are we missing?!}

%\cite{delman2007price}

%Our model introduces a new degree of freedom which conserves the existing mean structure and decouples the mean and variance. 
%This new degree of freedom allows us to manipulate the variance seen in robot populations when performing tasks. 

%\begin{itemize}
%    \item A decoupled formulation of the first and second order moments for an ensemble model of robot teams
%    \item Improved methods to determine the control gains which give certain desired behaviors for the allocation of robot populations. 
%    \item Experimental results which validate our presented control strategy
%\end{itemize}

%\cithales{[5] argues that ensemble with large number of robots, the linear model breaks, not the poisson process. One idea for this paragraph is 1) talk about the nonlineatiry on these networks and cite 5, the argue that for smaller networks the variance starts to grow bigger, problems for which we are proposing a potential solution.}

\section{Problem formulation}
\label{ProbForm}
%In this section, we formally define the problem tackled in this paper.
%Section \ref{sec:graph_theory} introduces the methodology applied
%to describe the tasks' topology and the stochastic process 
%describing the motion and reorganization of agents.
%In Section \ref{sec:SI_model} the group’s model of the network
%is precisely defined. At the end of Section \ref{sec:SI_model}, 
%the control problem studied is formulated.
%\citorrie{For length we may not need a description of the section}

\subsection{Graph Theory}
\label{sec:graph_theory}
We are interested in the
stochastic behavior 
of a robot ensemble
generated by robot-level uncertainties.
We assume a group of $N\in\mathbb{N}$ robots executing $M\in\mathbb{N}$ 
spatially distributed tasks,
the relationship among tasks is represented by a 
graph $\GG(\VV,\EE)$,
in which the elements of the node set $\VV=\{1,...,M\}$ 
represent the tasks, and the edge set
$\EE\subset \VV \times \VV$ 
maps 
adjacent allowable switching between task
locations.
Each element of the edge set represents a
directed path from $i$ to $j$ 
if $(i,j)\in\EE$.
A graph $\mathcal{G}$ is called an
{\it undirected graph} if $(i,j)\in \mathcal{E}
\iff (j,i)\in \mathcal{E}$.
We define the neighborhood of the $i$th {task}
node as the set formed by all {tasks} that have the
$i$th {task} as a child node and it is denoted by
$\NN_i=\{j\in\VV : (j,i)\in\EE\}$.
We call
a neighbor of a task $i$ an adjacent task $j$
that belongs to $\NN_i$.
In this paper we assume the following:
%\thales{The adjacency matrix $\A=[a_{ij}]$ associated with
%the graph $\GG$ is a mathematical object that
%maps the edge set into a matrix as,
%\begin{align}
%    \label{eq:adjacency}
%    a_{ij}=
%    \begin{cases}
%        0, & \text{if}~ i=j \text{ or } (j,i)\nexists \EE,  \\
%        1, & \text{if } (j,i)\in \EE,
%    \end{cases}
%\end{align}
%hence $a_{ij}$ maps the possibility of an agent going from task 
%$j$ to task $i$.}
%\cithales{we might be able to delete this to gain space. I'll think about it}
%\citorrie{We need to make sure we have consistency, the edges are equal to be undirected but to have control over both directions you need different transition rates for both forward and backward}
\begin{assumption}
\label{assump:undirected}
The graph that maps the relationship
between tasks, $\GG(\VV,\EE)$, is connected and undirected.
\end{assumption}
Assumption \ref{assump:undirected} allows 
the robots to sequentially move back and forth
between tasks.
For some task topologies, it is possible to adjust the
transition parameters
such that directionality in the task graph is obtained
\cite{Deshmukh2018}.
\subsection{Ensemble Model}
\label{sec:SI_model}
%The mathematical model employed describes the dynamic macroscopic evolution of $N$ robots simultaneously executing $M$ tasks. 
To derive the macroscopic ensemble model for the $N$ robot team simultaneously executing $M$ tasks,
we assume each robot can execute only one task at a given time
and
it must switch between
adjacent tasks to complete the team's  wide objective.
The model needs to capture the 
global stochastic characteristics that emerge from
a robot's interaction with other robots and the environment, such as variations in terrain \cite{prorok2017impact,Bill2012},
collision avoidance \cite{Bill2011}, and
delays \cite{berman2009}.
Thus, the state of our system is given by a vector 
of random variables
$\boldsymbol{X}(t)=[X_1(t)~\cdots~X_M(t)]'$,
where the prime symbol in
a vector, $\boldsymbol{a}'$, denotes its
transpose,
and each element of $\boldsymbol{X}(t)$ describes
the number of robots executing the respective task at time $t$, 
therefore $\sum_{i=1}^MX_i(t)=N$ for any $t\geq 0$.

We aim to characterize the evolution of the random
variable $X_i(t)$, for each $i\in\{1,\dots,M\}$ and,
ultimately, regulate
its value.
However, the stochastic nature of these
variables might lead to convergence only to a vicinity
of the desired value, where the size of the convergence
region depends on robot-level  
deviations from the desired performance
\cite{KHALIL2002ab}.
In principle, 
the control of systems subject to random variations
could be approached by applying robust control techniques
and defining input-to-state stability criteria. 
However, in our scenario, defining bounds for variations of 
robot-level behavior for a general number of tasks and
robots can be difficult.
Therefore, 
we approach the problem by controlling the mean value of the
state variables $X_i(t)$ and  
the dynamics of their second-order
moments.
Observe that the vicinity size of the
desired objective is directly associated
with the second-order moment.
%since regulating only the mean 
%value of a variable is more a
%fragile control objective
%than regulating the random variable itself
%\cite{klavins2010proportional,Napp2011}.

The states of our system
characterize pure
jump processes $X_i(t):[0,\infty)\rightarrow \mathcal{D}$, where
$\mathcal{D}$ is a finite discrete set.
In particular, let $\psi:\mathcal{D}\times[0,\infty)\rightarrow \real$
be a real-valued function, it is a known result
that the dynamics
of the expected value of $\psi$ is described by
the following relation
(see Chapter 1 in \cite{Oksendal2007}),
\begin{align*}
   \frac{d}{dt}E[\psi(X_i(t))] ={} &
   E[\LL\psi(X_i(t))],
\end{align*}
in which $\LL$ is the infinitesimal generator of the
stochastic process defined as 
\begin{align}
    \label{eq:inf_generator}
   \LL\psi(X_i(t)) ={} & 
   \sum_{\ell}[\psi(\phi_\ell(X_i(t))) - 
   \psi(X_i(t))
    ]\lambda_\ell(X_i(t)),
\end{align}
where  $\lambda_\ell(X_i(t))$ 
is a function of the number of
robots at task $X_i$ and in its neighborhood which describes
the rate of transitions, {\it i.e.}, $\lambda_\ell(X_i(t))dt$ represents
the probability of a transition in $X_i(t)$ occurring in
$dt$ time, $\phi_\ell(X_i(t))$ maps the size of a change
on $X_i(t)$ given that an $\ell$
transition happened. 
In our case, we assume two
possible unitary changes 
in $dt$,
a transition 
of type $\ell=1$, in which 
one agent leaves $X_i(t)$
defined by
$\phi_1(X_i(t))=X_i(t)-1$, 
and of type $\ell =2$ where
one agent arrives $X_i(t)$, defined by
$\phi_2(X_i(t))=X_i(t)+1$.
Hence, $\ell\in\{1,2\}$ maps the type
of transition, {\it i.e.,} one
robot leaving or one agent arriving
at $X_i(t)$, respectively.

In this paper, we are interested in enforcing
desired first-
and second-order moments of the  distribution of robots across 
tasks.
Specifically,
the problem investigated can be stated as:

\bigskip
\noindent \textbf{Problem 1.} 
Given $M$ tasks to be dynamically executed by $N$ robots,
in which the switching between adjacent tasks is 
represented by the
topology of the task graph $\GG(\VV,\EE)$,
define a feedback controller that changes the transition rates $\lambda_{\ell}(X_i)$,
for all $i\in\{1,\dots,M\}$,
such that
the distribution of first- and second-order moments 
converge to
the desired value.

\bigskip
\noindent It is important to note that a solution for a similar
problem was
previously proposed in \cite{Bill2011,Bill2012}.
Even though they
were motivated by the study from Klavins \cite{klavins2010proportional},
the proposed control strategy relies on a feedback 
linearization action,
which leads to a closed-loop system of the form 
(see equation $(9)$ in \cite{Bill2011})
\begin{align*}
    \frac{d}{dt}E[\boldsymbol{X}(t)]=
    (\boldsymbol{K}_{\alpha}+\boldsymbol{K}_{\beta})
    E[\boldsymbol{X}(t)],
\end{align*}
where $\boldsymbol{K}_{\alpha}$ is a matrix 
that maps the transition rate
of the system and $\boldsymbol{K}_{\beta}$ is a matrix with new parameters
to attain the desired variance.
The main drawback of {this} approach is twofold: 
i) by changing $\boldsymbol{K}_{\beta}$ the stationary distribution of
the equivalent
Markov chain also changes, which makes it necessary to
readjust the gains for the
whole system, and ii) addressing the regulation of
second-order moments using $\boldsymbol{K}_{\beta}$ is
{\it equivalent} to changing the transition rate of the whole process
during the entire evolution.
In comparison, our proposed strategy overcomes these limitations by simply manipulating
the graph structure and defining a new structure for 
$\lambda_\ell(\cdot)$,
in such a way that the new feedback strategy
does not directly modify the stationary distribution of the
equivalent Markov
chain and provides changes to the transition rates according to the
current state of the system.  We describe our methodology in the following section.

\section{Methodology}
%\subsection{Formation of the communication links}
\label{section3}
\label{sec:methodology}
\subsection{Control Strategy}

We model the arrival and departure of robots 
by pure stochastic jump processes.
For example, for two tasks:
\begin{align}
\label{ex:two_tasks}
    X_1(t) \underset{\lambda_{\cdot}(\cdot)}{\longleftrightarrow}
    X_2(t)
\end{align}
the random variables $X_1(t)$ and $X_2(t)$ 
represent the number of robots in each task at time $t$,
and 
$\lambda_{\ell}(X_i)$ for
$\ell \in\{1,2\}$ maps the transition rate of agents between tasks. 
%\citorrie{Do we need to define $\lambda$ for both directions?
%\cithales{it supports the argument of changing directions and helps to generalize
%eq. (2). It's the Poisson random measure.}}
Motivated by Klavins \cite{klavins2010proportional} on gene regulatory networks,
we approach Problem $1$ by defining the transition rate functions
with a term that
is proportional with the number of agents
on a given site and in its neighborhood.
In equation \eqref{ex:two_tasks} this choice 
means that switching from
task $1$ to $2$ and from $2$ to $1$ will
depend on the number of agents at both 
tasks.
In particular, for equation \eqref{eq:inf_generator} 
we define the
transition rates as,
\begin{align}
    \label{eq:tran_rate_out}
    \lambda_1(X_i) = 
    &
    \sum_{j\in\NN_i}
    k_{ij}X_i-\beta_{i}X_iX_j,
    \\
    \label{eq:tran_rate_in}
    \lambda_2(X_i) = &
    \sum_{j\in\NN_i}
    k_{ji}X_j-\beta_{i}X_iX_j.
\end{align}

\begin{remark}
\label{remark:1}
%\thales{These equations characterize
%the type of jump we require, {\it i.e.,} leaving and arriving transition rates.}
A necessary condition for the equations in
\eqref{eq:tran_rate_out} and \eqref{eq:tran_rate_in} to be valid
transition rates 
is being non-negative. 
While it is possible to guarantee positiveness during
the whole evolution for some well-designed initial conditions with careful choices of
gains, we instead, let robots transition
back to previous tasks before the 
team reaches steady-state by mapping the equivalent non-negative transition with reversed direction.
This flexibility
is the justification for Assumption
\ref{assump:undirected}.
\end{remark}

It is worth noticing that
our control strategy is distinct from the one
presented in \cite{klavins2010proportional}. Here, we only
require the transition rate to be adjusted according to the current
number of agents in the tasks and in their neighborhood.
While in
\cite{klavins2010proportional} it is assumed that each node in the network
has {\it a  source and a sink}, which implies that the variation on the value of the
random variable $X_i$ can increase or decrease 
independently of the variation in 
its neighbors,
in addition to the switching between
adjacent nodes.

In the following, we show that with our choice for the transition rates
$\lambda_\ell(X_i)$,
the first-order moment depends only on 
the parameter
$k_{ij}$, while the variable $\beta_{i}$ manifests on
the second-order moments.
This introduces a free variable, 
allowing the variance to be properly
adjusted.

Defining $\psi(X_i)=X_i$ and 
applying the infinitesimal generator \eqref{eq:inf_generator} 
with the transition rates in \eqref{eq:tran_rate_out} and
\eqref{eq:tran_rate_in},
we get the following 
first-order moment for a general arrangement,
\begin{align*}
\frac{d}{dt}E[X_i] ={}
& E\Bigg[ 
\sum_{j\in \NN_i} 
\big( k_{ji}X_j-\beta_{i} X_iX_j\big)
-
\sum_{j\in \NN_i} 
\big( k_{ij}X_i-\beta_{i} X_jX_i\big)
\Bigg]
\nonumber
\\
=& {}  
\sum_{j\in \NN_i} 
\big(
k_{ji}E[X_j]
-k_{ij}E[X_i]
\big).
\end{align*}

While for the second-order moments
we define $\psi(X_i)=X_i^2$,
\begin{align*}
\frac{d}{dt}E[X_i^2] ={}
& E\Bigg[ 
\big((X_i+1)^2 - X_i^2 \big)
\sum_{j\in \NN_i} \big(k_{ji}X_j-\beta_{i}X_iX_j\big)
\\
&{} +
\big((X_i-1)^2 - X_i^2 \big)
\sum_{j\in \NN_i}
\big(k_{ij}X_i-\beta_{i}X_iX_j\big)
\Bigg]
%\\
%=&
%~ E\left[ 
%(2X_i+1) 
%\sum_{j\in \NN_i} \big(k_{ji}X_j-\beta_{i}X_iX_j\big)
%\right.
%\\
%&{} +\left.
%(-2X_i+1)
%\sum_{j\in \NN_i}
%\big(k_{ij}X_i-\beta_{i}X_iX_j\big)
%\right]
\\
=&
~ E\Bigg[ 
\sum_{j\in \NN_i}
\big(2k_{ji}X_iX_j-2k_{ij}X_iX_i
+k_{ij}X_i
+k_{ji}X_j-2\beta_{i}X_iX_j\big) 
%\big(k_{ij}X_i-\beta_{i}X_iX_j\big)
\Bigg],
\end{align*}
and for off-diagonal terms,
\begin{align*}
\frac{d}{dt}E[X_iX_q] ={}
& E\Bigg[ 
\big((X_i+1)X_q - X_iX_q \big)
\sum_{j\in \NN_i} \big(k_{ji}X_j-\beta_{i}X_iX_j\big)
\\
&{}
+\big((X_q+1)X_i - X_iX_q \big)
\sum_{j\in \NN_q} \big(k_{jq}X_j-\beta_{q}X_qX_j\big)
\\
&{} +
\big((X_i-1)X_q - X_iX_q \big)
\sum_{j\in \NN_i}
\big(k_{ij}X_i-\beta_{i}X_iX_j\big)
\\
&{}+
\big((X_q-1)X_i - X_iX_q \big)
\sum_{j\in \NN_q}
\big(k_{qj}X_q-\beta_{q}X_qX_j\big)
\Bigg]
\\
=&
~ E\Bigg[ 
\sum_{j\in \NN_i}
\big(k_{ji}X_q X_j -k_{ij}X_iX_q \big)
+
\sum_{j\in \NN_q} 
\big(k_{jq}X_iX_j-k_{qj}X_qX_i\big)
\Bigg].
\end{align*}
An important property of these equations
is that they are in
closed-form, which
allows us to use them to design gains
to attain the desired distribution.
For a compact vectorized form,
let ${\boldsymbol{K}}$ be an $M \times M$ matrix defined as follows,
\begin{align}
\label{eq:k_structure}
  \boldsymbol{K}_{ij} =
  \begin{cases}
    k_{ij}         & \text{if }(j,i)\in\EE,\\
    -\sum_q^M k_{iq} & \text{if }i = j, \\
    0              & \text{otherwise},
  \end{cases}
\end{align}
then the first- and second-order moments can be written as,
\begin{align}
    \label{eq:first_order}
    \frac{d}{dt}E[\boldsymbol{X}]=&
    ~\boldsymbol{K}
    E[\boldsymbol{X}],
    \\
    \label{eq:second_order}
    \frac{d}{dt}E[\boldsymbol{XX}']=&
    ~\boldsymbol{K}E[\boldsymbol{XX}']
    +E[\boldsymbol{XX}']\boldsymbol{K}'
    \nonumber
    \\
    &+\sum_{i=1}^Me_ie_i'\otimes \Bigg(\sum_{j\in\NN_i}
    \big(k_{ij}E[X_i]
+k_{ji}E[X_j] -2\beta_i
    E[X_iX_j] \big) \Bigg),
\end{align}
where $e_i$, for $i=1,...,M$, are the canonical basis of
$\real^M$ and 
$\otimes$ denotes the Kronecker product.
Equations \eqref{eq:first_order} and
\eqref{eq:second_order} model the dynamics of 
the first- and second-order moments of a group
of $N$ robots executing $M$ tasks.
There are two important underlying assumptions
on this representation,
first we assume the timing for robots to leave and
arrive at a task follows a Poisson process. 
We believe this assumption is not too restrictive
since, as discussed above, robot level deviations from 
desired performance will
impact the scheduling, leading
to stochastic
jump processes.
The second assumption, and arguably the hardest to justify
for general systems, is that {\it we can actuate on the
system's transition rates}.
To be able to do this in our case,
notice the structure of 
$\lambda_\ell(\cdot)$ in 
\eqref{eq:tran_rate_out} and 
\eqref{eq:tran_rate_in},
we need to monitor the current 
number of agents at each task and their neighborhood
and then communicate this value 
to each agent at these sites 
to compute the switching time.
Dynamically defining the transition rates has
an intricate relationship with the allocated time and
microscopic deviations.
We expect that the feedback nature of our approach
accounts for microscopic deviations from the desired
performance in the long run. 

\subsection{Control Analysis}
\label{sec:control_analysis}
In this section we analyse the convergence of the mean
to the desired distribution and 
show a simple method to obtain the parameters
of the gain matrix ${\boldsymbol K}$.
\begin{theorem}
\label{theorem:1}
Let $N$ agents execute $M$ tasks organized according to an
undirected graph $\GG(\VV,\EE)$.
Given a desired stationary distribution
$E[{\boldsymbol X}^d]$ such that $\sum_{i=1}^MX^d_i=N$,
the robot ensemble converges asymptotically to
$E[{\boldsymbol X}^d]$ from any initial distribution,
with transition rates
defined in \eqref{eq:tran_rate_out} and \eqref{eq:tran_rate_in},
if, and only if ${\boldsymbol K}E[{\boldsymbol X}^d]=0$.
\end{theorem}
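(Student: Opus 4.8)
The plan is to analyze the linear system \eqref{eq:first_order}, namely $\frac{d}{dt}E[\boldsymbol{X}] = \boldsymbol{K}E[\boldsymbol{X}]$, and characterize when its trajectories converge to the prescribed vector $E[\boldsymbol{X}^d]$. The first observation is that, by the construction \eqref{eq:k_structure}, each column of $\boldsymbol{K}$ sums to zero (the diagonal entry $-\sum_q k_{iq}$ exactly cancels the off-diagonal entries $k_{ij}$ in that column), so $\boldsymbol{1}'\boldsymbol{K} = 0$. This immediately gives $\frac{d}{dt}\big(\boldsymbol{1}'E[\boldsymbol{X}]\big) = 0$, i.e. the total $\sum_i E[X_i] = N$ is conserved along trajectories, which is consistent with the constraint $\sum_i X_i^d = N$. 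Thus the dynamics evolve on the affine hyperplane $\{\boldsymbol{x} : \boldsymbol{1}'\boldsymbol{x} = N\}$, and it suffices to study convergence there.

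For the ``if'' direction, suppose $\boldsymbol{K}E[\boldsymbol{X}^d] = 0$. Then $\boldsymbol{z}(t) := E[\boldsymbol{X}(t)] - E[\boldsymbol{X}^d]$ satisfies $\dot{\boldsymbol{z}} = \boldsymbol{K}\boldsymbol{z}$ with $\boldsymbol{1}'\boldsymbol{z}(0) = 0$. The key structural fact is that $\boldsymbol{K}$ is (the transpose of) a Metzler matrix / graph-Laplacian-type generator: off-diagonal entries $k_{ij}\ge 0$ and columns summing to zero make $\boldsymbol{K}$ the generator of a continuous-time Markov chain on the task graph. Since $\GG$ is connected and undirected (Assumption~\ref{assump:undirected}), this chain is irreducible, so by standard Perron--Frobenius / Markov chain theory $\boldsymbol{K}$ has a simple zero eigenvalue with all other eigenvalues having strictly negative real part, and the zero eigenspace is spanned by the unique stationary vector. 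Restricted to the invariant subspace $\{\boldsymbol{1}'\boldsymbol{z}=0\}$, the matrix $\boldsymbol{K}$ is Hurwitz, hence $\boldsymbol{z}(t)\to 0$, i.e. $E[\boldsymbol{X}(t)]\to E[\boldsymbol{X}^d]$ from any initial condition on the hyperplane — and any admissible initial robot distribution lies on it. A clean way to package the stability claim is to exhibit a Lyapunov function; for a reversible chain one can diagonalize, but in general one can invoke that a connected graph Laplacian-type generator restricted to the zero-sum subspace is Hurwitz. For the ``only if'' direction, if the ensemble converges to $E[\boldsymbol{X}^d]$ from \emph{every} initial distribution, then in particular starting at $E[\boldsymbol{X}^d]$ itself the trajectory is constant, which forces $\dot{E}[\boldsymbol{X}] = \boldsymbol{K}E[\boldsymbol{X}^d] = 0$; alternatively, any limit point of a trajectory of an autonomous linear system must be an equilibrium, so $\boldsymbol{K}E[\boldsymbol{X}^d]=0$ is necessary.

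The main obstacle is making the stability argument rigorous for a general connected undirected graph without assuming reversibility or symmetry of $\boldsymbol{K}$: the gains $k_{ij}$ and $k_{ji}$ need not be equal, so $\boldsymbol{K}$ is not symmetric and one cannot simply read off real eigenvalues. The cleanest route is to note that $\boldsymbol{K}'$ is the generator of an irreducible continuous-time Markov chain (row sums zero, nonnegative off-diagonal entries, connectivity giving irreducibility), invoke the standard result that such a generator has a simple eigenvalue $0$ and spectrum otherwise in the open left half-plane, and transfer this to $\boldsymbol{K}$ since a matrix and its transpose share a spectrum. Combined with the conserved quantity $\boldsymbol{1}'\boldsymbol{z}=0$ pinning the dynamics to the complementary invariant subspace on which $\boldsymbol{K}$ is Hurwitz, asymptotic convergence follows. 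I would also remark that existence of \emph{some} $E[\boldsymbol{X}^d]$ with $\boldsymbol{K}E[\boldsymbol{X}^d]=0$ and $\boldsymbol{1}'E[\boldsymbol{X}^d]=N$ is automatic (the stationary vector of the chain, scaled to have sum $N$), which is what the subsequent gain-design procedure exploits.
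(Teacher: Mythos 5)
Your proof is correct, but it takes a genuinely different route from the paper. The paper argues via Lyapunov/LaSalle: it asserts (without derivation) that $\boldsymbol{K}$ has one zero eigenvalue and the rest with negative real part, takes $V = E[\boldsymbol{X}]'PE[\boldsymbol{X}]$ for an arbitrary symmetric positive definite $P$, claims $\dot V \le 0$ from the spectral location alone, and then excludes all of the invariant set except $E[\boldsymbol{X}^d]$ using the conservation $\sum_i X_i = N$. You instead identify $\boldsymbol{K}'$ as the generator of an irreducible continuous-time Markov chain on the connected task graph, import the standard Perron--Frobenius conclusion that the zero eigenvalue is \emph{simple} with the remaining spectrum in the open left half-plane, and combine this with the conserved quantity $\boldsymbol{1}'E[\boldsymbol{X}] = N$ to restrict the error dynamics to the zero-sum subspace, on which $\boldsymbol{K}$ is Hurwitz. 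Your version buys two things the paper's sketch does not deliver: (i) it actually proves the spectral claim rather than asserting it ``by definition,'' and rules out a defective zero eigenvalue via simplicity, whereas the paper's step ``eigenvalues in the closed left half-plane $\Rightarrow \boldsymbol{K}'P + P\boldsymbol{K} \preceq 0$ for any positive definite $P$'' is not valid as stated; and (ii) it supplies the ``only if'' direction (a convergent trajectory of an autonomous linear system must limit to an equilibrium), which the paper's proof omits entirely even though the theorem is stated as an equivalence. The one hypothesis you should make explicit is that $k_{ij} > 0$ for every edge of $\GG$, since irreducibility of the chain (and hence simplicity of the zero eigenvalue) requires the gains to not vanish on any edge needed for connectivity; the paper makes the same assumption implicitly.
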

\begin{proof}
The demonstration follows from standard Lyapunov analysis. 
Note that, by definition,
the eigenvalues of matrix ${\boldsymbol K}$ have
negative real part, except for one which is zero, that is,
$0=\sigma_1({\boldsymbol K})>\sigma_2({\boldsymbol K})\geq\cdots\geq\sigma_M({\boldsymbol K})$, where
$\sigma_i({\boldsymbol K})$ denotes the $i$th eigenvalue of ${\bf K}$.
Hence, defining a simple Lyapunov function candidate,
    $V = E[{\boldsymbol X}]' P E[{\boldsymbol X}],$
where $P$ is a symmetric positive definite matrix.
The time-derivative of $V$ along the trajectories of the first-order moments yields
\begin{align*}
    \frac{dV}{dt} =
    E[{\boldsymbol X}]' \big( {\boldsymbol K}' P  
    +  P {\boldsymbol K} \big) E[{\boldsymbol X}],
\end{align*}
because every eigenvalue of ${\boldsymbol K}$ has
negative or zero real part, we have that
\begin{align*}
    \frac{dV}{dt} \leq 0.
\end{align*}
Consider the invariant set $S=\{E[{\boldsymbol X}]\in\real^M:\frac{dV}{dt}=0\}$,
since the robots will always be in one of the graph's nodes, {\it i.e.,}
$\sum_{i=1}^MX_i=N$, we have that $E[{\boldsymbol X}(t)]\neq \boldsymbol{0}$ for all $t>0$.
Therefore, no solution can stay in $S$ other than $E[{\boldsymbol X}^d]$, in other words,
the equilibrium point $E[{\boldsymbol X}^d]$ is globally asymptotically stable.
\end{proof}

Notice
that we assumed
that each
robot will always
be assigned to
one task at any
instant, to
consider
robots in transit between nodes 
it is possible to
extend the task 
graph as in \cite{berman2009}.
In addition,
by construction, ${\boldsymbol K}$ results in a stable ensemble.
The compelling aspect of Theorem \ref{theorem:1} with respect to our model
for the first- and second-order moment, equations
\eqref{eq:first_order} and \eqref{eq:second_order},
is that we can use the condition provided on the theorem to design the
gains to attain a desired distribution, while still having free
variables $\beta_i$ to adjust the second-order terms.
We compute the gain matrix ${\boldsymbol K}$ by solving the following
optimization:
\begin{align}
\label{eq:optimizationK}
    {\boldsymbol K} = {}&\underset{{\boldsymbol Q}}{\rm{argmin}}
    \lVert {\boldsymbol Q } E[{\boldsymbol X}^d] \lVert
    \\
    \nonumber
    {}&\rm{s.t.} ~{\bf 1}'{\boldsymbol Q}=\boldsymbol{0};~{\boldsymbol Q}\in \mathcal{K};
    ~\text{structure in \eqref{eq:k_structure}},
\end{align}
where the set $\mathcal{K}$ accounts for the set of possible
rate matrices for a given set of tasks and 
$E[{\boldsymbol X}^d]\in{\rm Null}({\boldsymbol Q}) $ is the desired final distribution.

At the moment,
although we have shown
additional tuning variables in our approach,
we do not have
a methodology to compute appropriate gains $\beta_i$ for
a desired covariance matrix.
Nonetheless, we give an intuition of the
mechanism of how they impact the steady-state
covariance matrix. Let 
${\boldsymbol C}=E[({\boldsymbol X} -E[{\boldsymbol X}])({\boldsymbol X} -E[{\boldsymbol X}])']$ be the covariance matrix, 
replacing $E[{\boldsymbol X}{\boldsymbol X}']$ in  \eqref{eq:second_order} and recalling that 
${\boldsymbol K}E[{\boldsymbol X}]=\boldsymbol{0}$ during the steady-state, 
analysing the equilibrium 
point $\frac{d}{dt}E[\boldsymbol{XX}']=\boldsymbol{0}$ gives us
\begin{align*}
    {\boldsymbol{KC}} +  {\boldsymbol C}{\boldsymbol K}'
    = 
    &
    -\sum_{i=1}^Me_ie_i'\otimes \Bigg(\sum_{j\in\NN_i}
    \big(k_{ij}E[X_i^d]
+k_{ji}E[X_j^d] -2\beta_i
    E[X_i^dX_j^d] \big) \Bigg).
\end{align*}
Since every eigenvalue of ${\boldsymbol K}$ has negative real part except for one which is equal to zero,
and $E[{\bf X}^d] \neq \boldsymbol{0}$, we have an unique solution for ${\boldsymbol C}$ \cite{KHALIL2002ab,klavins2010proportional}.
Therefore, the
additional tuning variables 
$\beta_i$, for $ i\in\VV$
are inversely proportional to
the covariance at the 
steady-state--the bigger the values of $\beta_i$, while \eqref{eq:tran_rate_out} and \eqref{eq:tran_rate_in}
are positive during the steady-state 
(recalling Remark \ref{remark:1}),
the smaller the covariance.
   
\section{Results}
\label{simulation}
\begin{figure}[ht]
    \centering
    \begin{subfigure}[t]{\textwidth}
    \centering
    \includegraphics[scale=0.21]{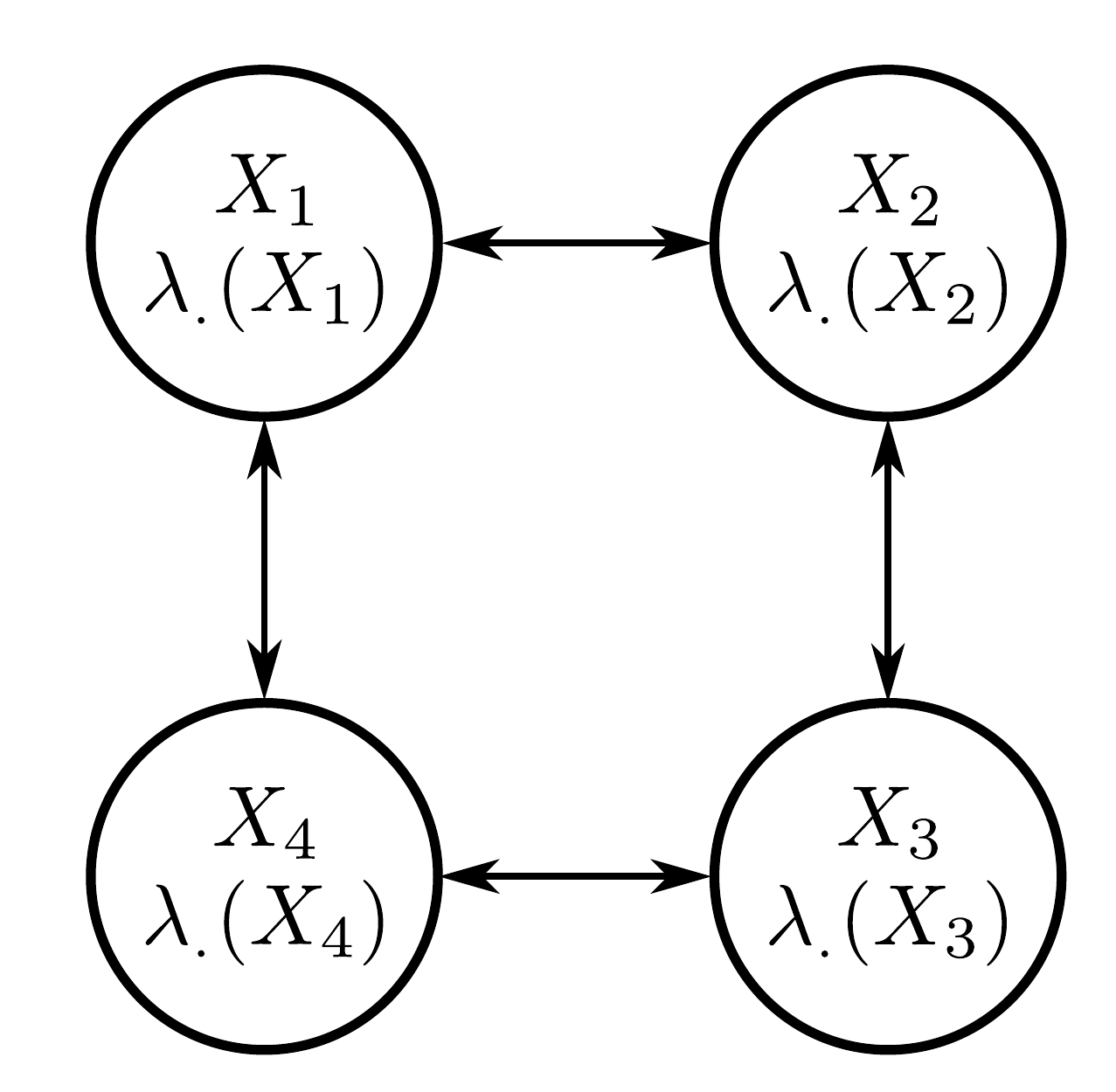}
    \caption{Graph Topology}
    \label{fig:four_task_graph}
    \end{subfigure}
    \\
    \begin{subfigure}[t]{0.47\textwidth}
    \centering
    \includegraphics[width=\textwidth]{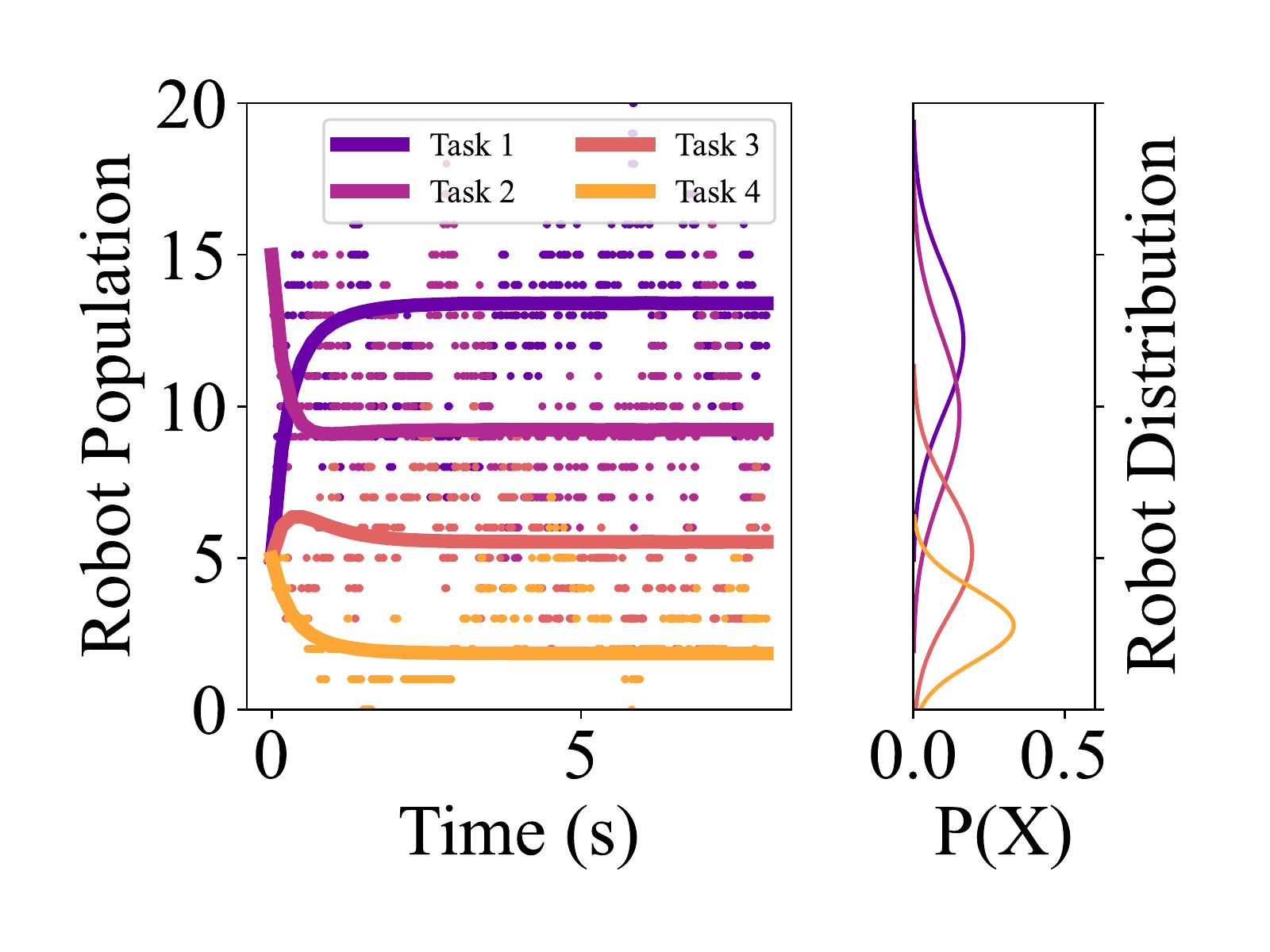}
    \caption{Realization with 
    $\beta_i=0,~\forall i\in \VV$}
    \label{fig:four_task_without_control}
    \end{subfigure} 
    \begin{subfigure}[t]{0.47\textwidth}
    \centering
    \includegraphics[width=\textwidth]{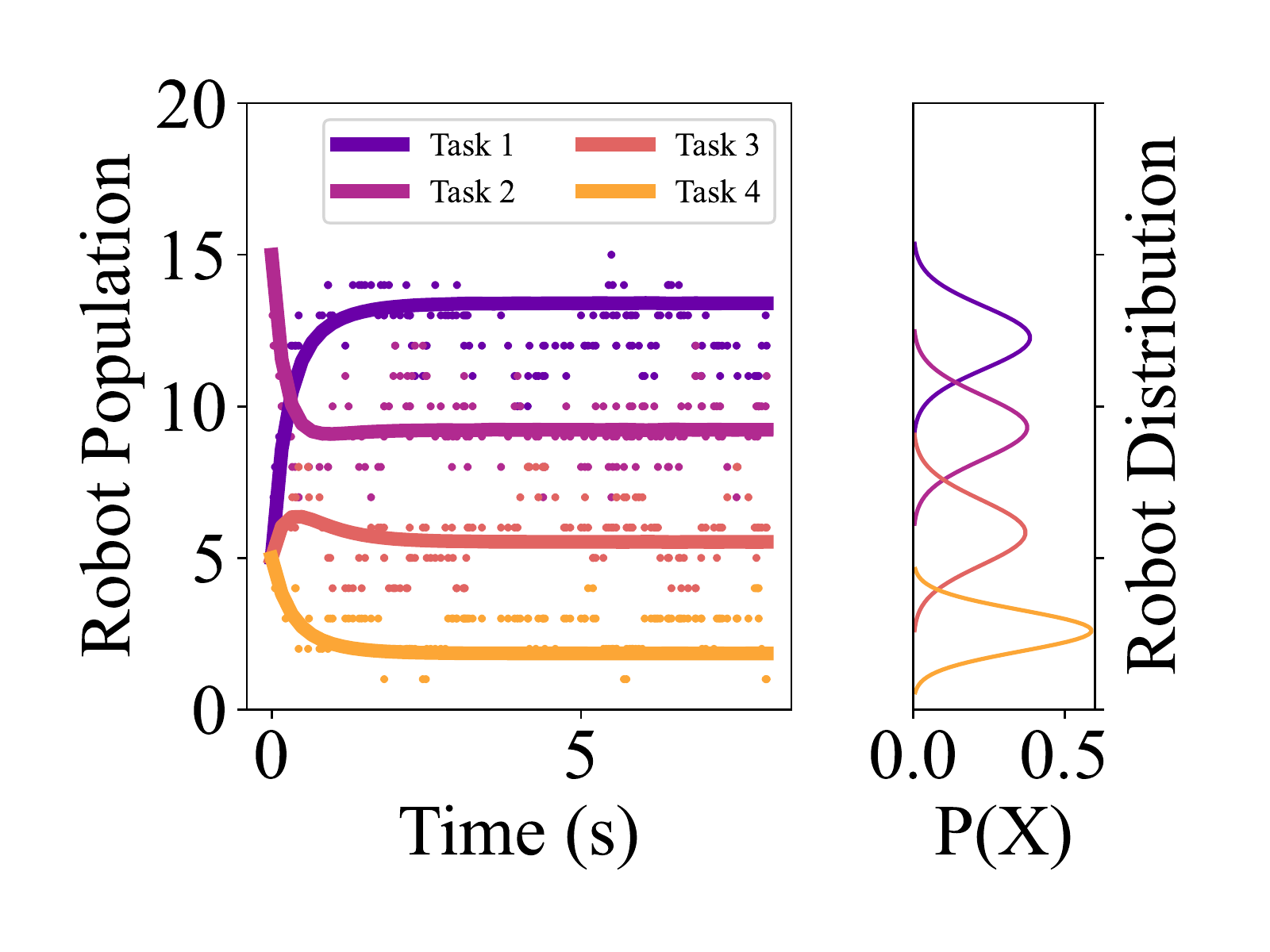}
    \caption{Realization with 
    $\beta_i\neq 0,\forall i\in \VV$}
    \label{fig:four_task_with_control}
    \end{subfigure}
    \caption{
    The graph topology is depicted in (a).
    In (b) and (c) it is shown
    a realization of each of the
    stochastic simulations, where the dots
    represent the number of agents 
    and the solid lines
    represent the numerical solution of
    \eqref{eq:first_order}. 
    The tasks are identified by
    the color caption.
    The realization in (b) uses
    $\beta_i= 0,~\forall i\in \VV$,
    while in (c) is with
    $\beta_1 = 0.05$, $\beta_2 = 0.20$, $\beta_3 = 0.11$, and $\beta_4 = 0.052$.
    } 
    \label{fig:four_task_example}
\end{figure}

\subsection{\bf Numerical Simulations}
\begin{example}
We use the stochastic simulation algorithm
proposed by Gillespie \cite{gillespie1977exact}
to evaluate the proposed control strategy.
We present a four task example, $M = 4$,
with the topology
represented in Figure \ref{fig:four_task_example} (a).
We run two different tests 
to evaluate the effectiveness
of our method, one with parameters
$\beta_i=0$, for $i=1,\dots,4$, and another
with nonzero  $\beta_i$ parameters (values given
below).
In both
cases, we aim to achieve the same 
final distribution.
The initial populations at each
task is given by 
${\boldsymbol X}_0 = [ 5 ~ 15 ~ 5 ~ 5]'$, and
the desired distribution is 
$E[{\boldsymbol X}^d] = [ 13 ~ 9 ~ 6 ~ 2 ]'$.
The gain matrix 
${\boldsymbol K}$ used to reach such a 
distribution was
computed solving the optimization
problem \eqref{eq:optimizationK} 
with $\mathcal{K}=\{{\boldsymbol K}\in\real^{4\times4}
:{\boldsymbol K}\leq -1.5I \}$, this
constraint was imposed to ensure 
not too long convergence time. 
The resulting parameters from the
optimization are $k_{12} = 2.1$,
$k_{14} = 1.4$,
$k_{21} = 1.5$,
$k_{23} = 1.3$,
$k_{32} = 0.9$,
$k_{34} = 1.2$,
$k_{41} = 0.1$, 
$k_{43} = 0.6$, and $0$ elsewhere outside
the main-diagonal. 
We computed the mean and variance from
the simulation considering
$130$ data points sampled for $t > 2.0$ seconds in each simulation.
This number was chosen to match the sample size between the two
experiments.
%which is necessary
%since with the covariance control
%the evolution tends
%to have fewer jumps once at steady-state.
%Hence, we built samples with $130$
%observations for each simulation from $t>2$ with uniform distribution.
The simulation with 
$\beta_i = 0$ for all $i=1,\dots,4$ 
in 
Figure \ref{fig:four_task_example} (b)
has the following mean 
$E[{\boldsymbol X}] = 
[ 12.16 ~ 9.76 ~ 5.18 ~ 2.77]'$,
and variances,
$\diag(E[{\boldsymbol{XX}}']) = [ 5.78 ~ 6.83 ~ 4.20 ~ 1.44]'$.
The variances were improved by greedily modifying 
$\beta_{i}$ values--if the variance was reduced and
did not actively disturb another
task population variance then that
value was kept. 
Figure \ref{fig:four_task_example} (c) used parameters
    $\beta_1 = 0.05$, $\beta_2 = 0.20$, $\beta_3 = 0.11$, and $\beta_4 = 0.052$,
where the resulting means are 
$E[{\boldsymbol X}] = [ 12.26 ~ 9.30 ~ 5.83 ~ 2.60 ]'$ 
variances are 
$\diag(E[{\boldsymbol{XX}}']) = [1.06 ~ 1.12 ~ 1.15 ~ 0.45 ]'$. 

These results highlight the impact of variability on small
size teams ($N=30$).
Visually, the control reduces the variance throughout the
experiment, especially once steady-state is reached.
%Improvements with our proposed
%control method are visible. 
It is worth mentioning that we have tested our strategy on larger
teams and the results were in agreement with our hypothesis
that it is possible to shape the variance using
\eqref{eq:tran_rate_out} and \eqref{eq:tran_rate_in}, however,
due to space limitations of the venue, we chose to not show
those results.
\end{example}

\begin{example}
In this example, we run $6$ different trials with
varying team sizes to
numerically evaluate the variation in accuracy for teams of
different sizes. To this end, we consider four tasks
with topology represented in Figure \ref{fig:four_task_example} (a),
and we computed the variance considering
$160$ 
data points sampled from each simulation for
$t>2.0 $ seconds.
In addition, we vary the total number of robots
between simulations, hence relative the initial
and desired distributions for each simulation are:
${\boldsymbol X}_0 = [ 25\% ~ 25\% ~ 0\% ~ 50\%]'$, and
$E[{\boldsymbol X}^d] = [ 50\% ~ 50\% ~ 0\% ~ 0\% ]'$,
respectively,
given
as a
percentage of the total number of robots in the team. 
In these simulations, we
used the same
values for 
the parameters
$\beta_i$
as in Example 1,
and we do not change them between simulations.
To provide an intuition of the level of spread in each
scenario, we compute the Relative Variance (RV) as
the quotient of the mean by the variance of the
respective task. The results are presented in
Table \ref{tab:table1}.
\begin{table}[ht]
\begin{center}
\begin{tabular}{ c|c|c|c||c|c|c  }
 \hline
 \multicolumn{4}{c||}{$\beta_i=0$ for $i=1,\dots,4~$}
 &\multicolumn{3}{|c}{$~\beta_i\neq0$ for $i=1,\dots,4~$} \\
 \hline
 & $N=52$ & $~N=26~$& $~N=16~$ &$~N=52~$&$~N=26~$&$~N=16~$\\
 \hline
 $E[\boldsymbol{X}_1]$&$24.8$ & $12.8$ & $8.2$  &$25.2$ & $13.3 $& $7.5 $\\
 $E[\boldsymbol{X}_2]$&$26.5$ & $12.7$ & $7.7$  &$25.6$ & $14.6 $& $8.3 $\\
 $E[\boldsymbol{X}_3]$&$0.5$  & $0.0$  & $0.0$  &$0.6$  & $0.0 $ & $0.1 $ \\
 $E[\boldsymbol{X}_4]$&$0.2$  & $0.0$  & $0.0$  &$0.4$  & $0.0 $ & $0.0 $ \\
 \hline
 RV($\boldsymbol{X}_1$)&$0.49$& $0.62$ & $0.61$  &$0.21 $ & $0.36 $& $0.53 $\\
 RV($\boldsymbol{X}_2$)&$0.54$& $0.63$ & $0.69$  &$0.21 $ & $0.23 $& $0.47 $\\
 RV($\boldsymbol{X}_3$)&$0.97$& $0.1$ & $0.0$  &$0.88 $ & $0.00 $& $1.56 $\\
 RV($\boldsymbol{X}_4$)&$0.75$& $0.1$ & $0.0$  &$1.38 $ & $0.00 $& $0.00 $\\
 \hline
\end{tabular}
\bigskip
\caption{\label{tab:table1}
Expectation and Relative Variance (RV) 
of a
series of
numerical experiments considering different team sizes, with
task graph in Figure \ref{fig:four_task_example} (a),
initial and desired final distributions 
${\boldsymbol X}_0 = [ 25\% ~ 25\% ~ 0\% ~ 50\%]'$, and
$E[{\boldsymbol X}^d] = [ 50\% ~ 50\% ~ 0\% ~ 0\% ]'$,
respectively, and when $\beta_i$ parameters
are considered their values are given by
    $\beta_1 = 0.05$, $\beta_2 = 0.20$, $\beta_3 = 0.11$, and $\beta_4 = 0.052$.
}
\end{center}
\end{table}
We notice that for this scenario, the RV
for the multi-robot system without $\beta_i$ and
with $52$ robots is similar to the RV of a team
with only $16$ robots and $\beta_i$ values.
This suggests that our methodology improves the accuracy
of the allocation of relatively small teams.
Figures \ref{fig:four_task_example2} (a) and (b) provide a realization for an instance with 
$\beta_i=0$ and $N=52$ and with $\beta_i\neq0$ and $N=16$.
\begin{figure}[ht]
    \centering
    \begin{subfigure}[t]{0.47\textwidth}
    \centering
    \includegraphics[width=\textwidth]{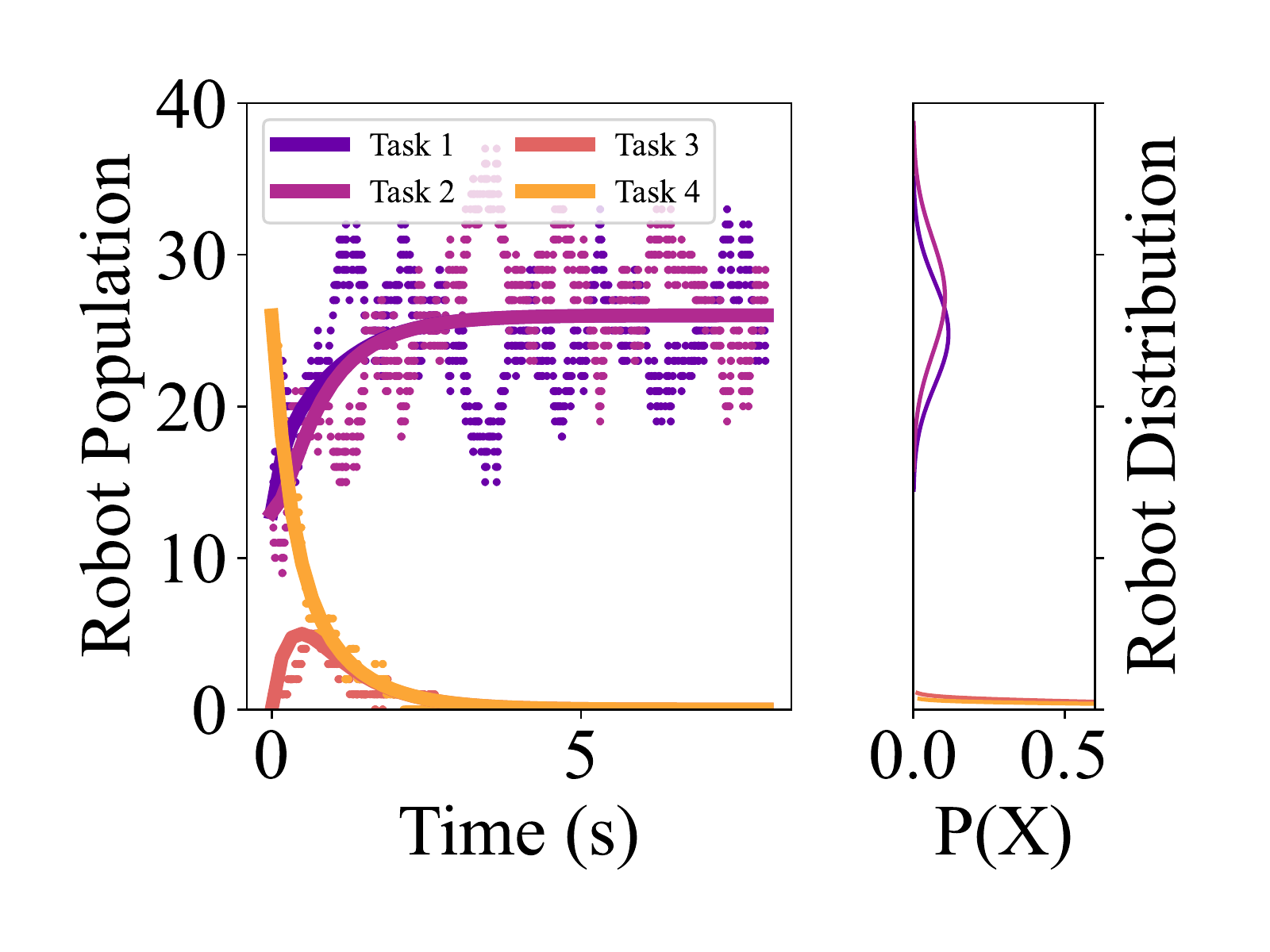}
    \caption{Realization without 
    $\beta_i$ and $N=52$}
    \end{subfigure} 
    \begin{subfigure}[t]{0.47\textwidth}
    \centering
    \includegraphics[width=\textwidth]{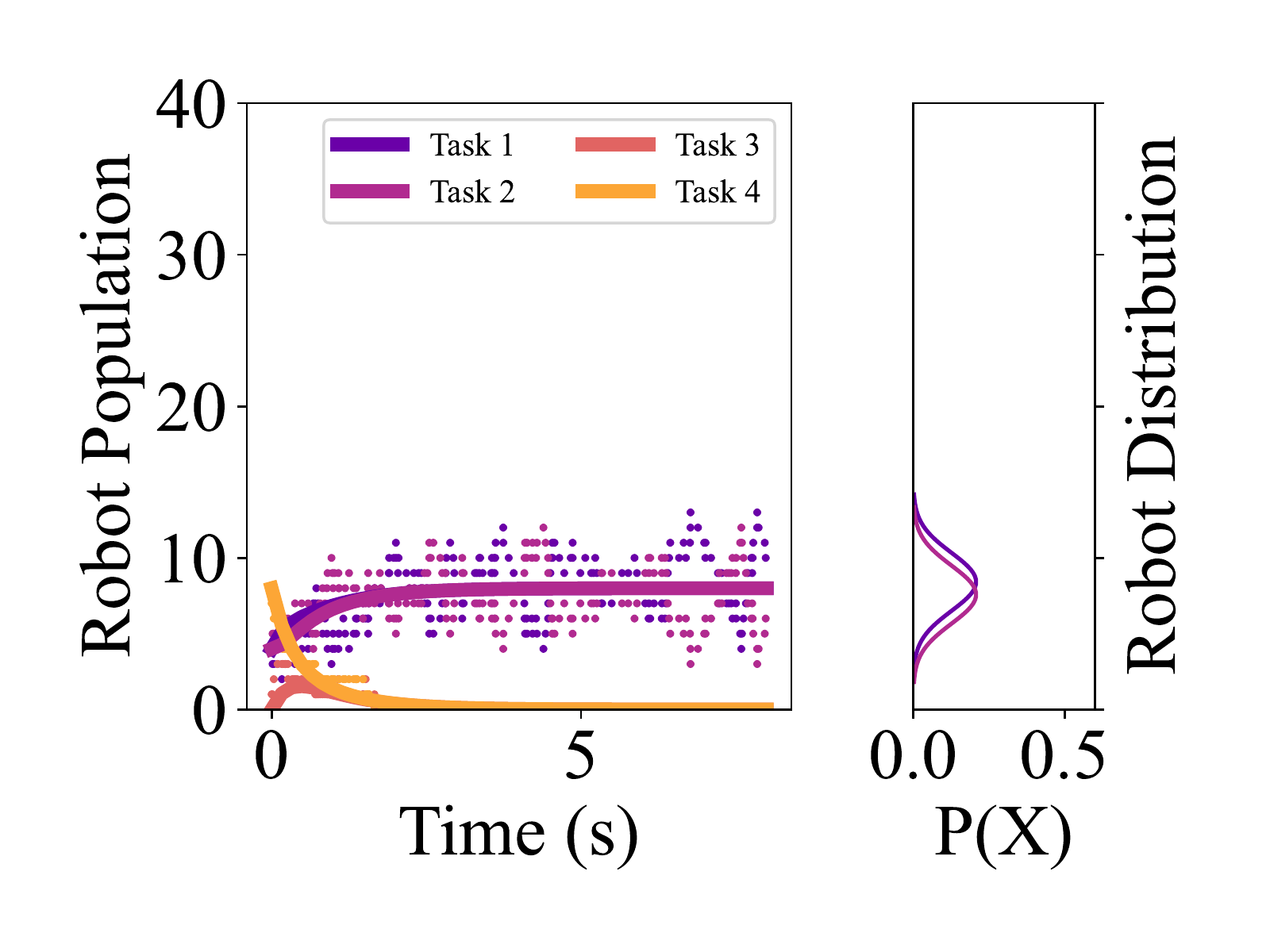}
    \caption{Realization with 
    $\beta_i$ and $N=16$}
    \end{subfigure}
    \caption{
    In (b) and (c) it is shown
    a realization of each of the
    stochastic simulations with $N=52$ and $N=16$,
    respectively, where the dots
    represent the number of agents 
    and the solid lines
    represent the numerical solution of
    \eqref{eq:first_order}. 
    The figure axis are the same to help the visualization
    of the relative spread.
    The tasks are identified by
    the color caption
    and  
    the graph topology is depicted in Figure \ref{fig:four_task_example} (a).
    The realization in (b) uses
    $\beta_i= 0,~\forall i\in \VV$,
    while in (c) is with
    $\beta_1 = 0.05$, $\beta_2 = 0.20$, $\beta_3 = 0.11$, and $\beta_4 = 0.052$.
    } 
    \label{fig:four_task_example2}
\end{figure}
\end{example}

\begin{figure}[ht]
    \centering
    \begin{subfigure}{0.4\textwidth}
    \centering
    \includegraphics[scale=0.035]{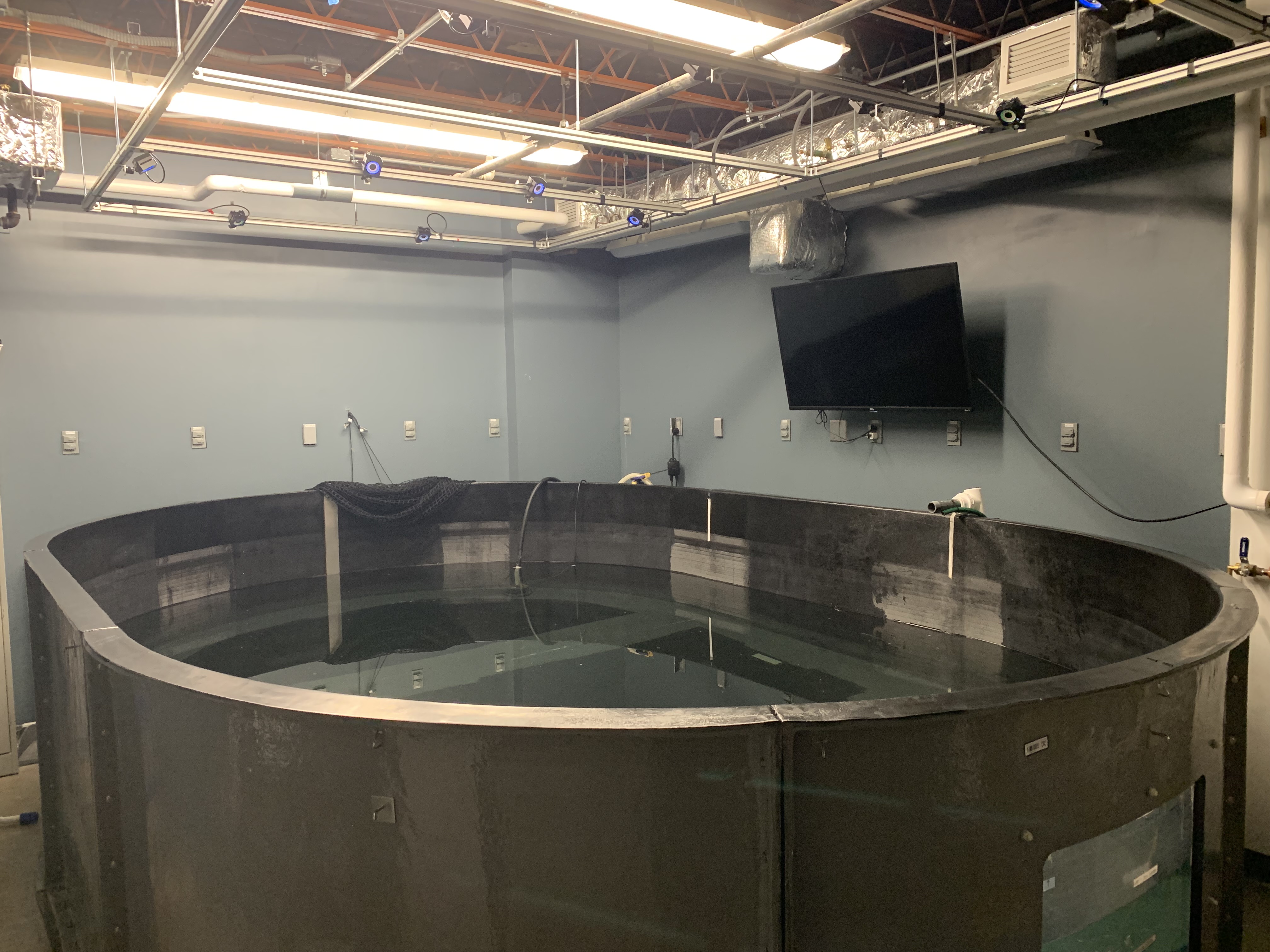}
    \caption{mCoSTe Environment}
    \label{fig:exp_setup}
    \end{subfigure} 
    \begin{subfigure}{0.45\textwidth}
    \centering
    \includegraphics[scale=0.1103]{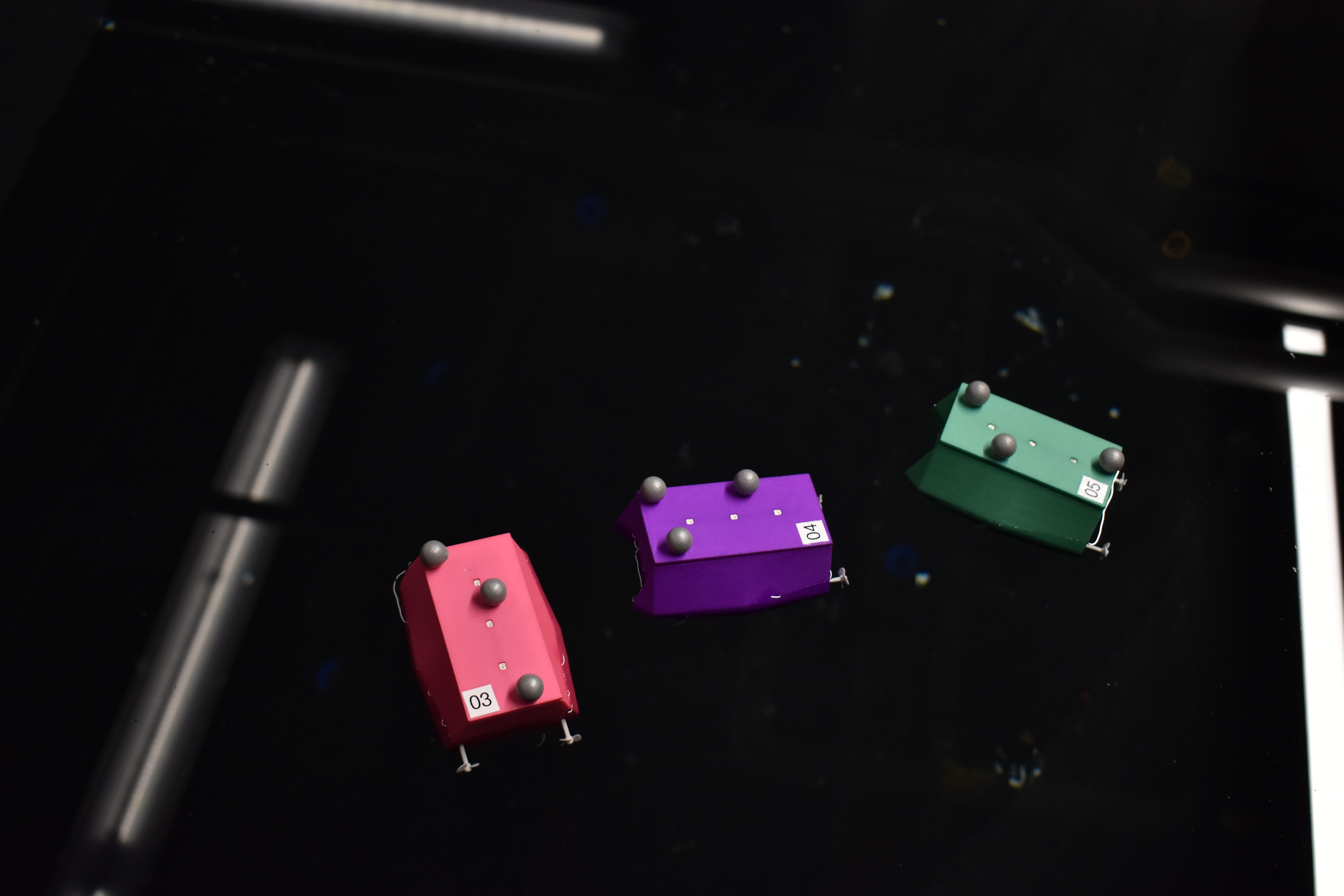}
    \caption{mASV Platform}
    \label{fig:tiny_boats}
    \end{subfigure}
    \caption{ The mCoSTe environment and mASV platform used for experimental results.} 
    \label{fig:exp_env}
\end{figure}
\begin{figure}[ht]
    \centering
    \begin{subfigure}{0.48\textwidth}
    \centering
    \includegraphics[scale=0.23]{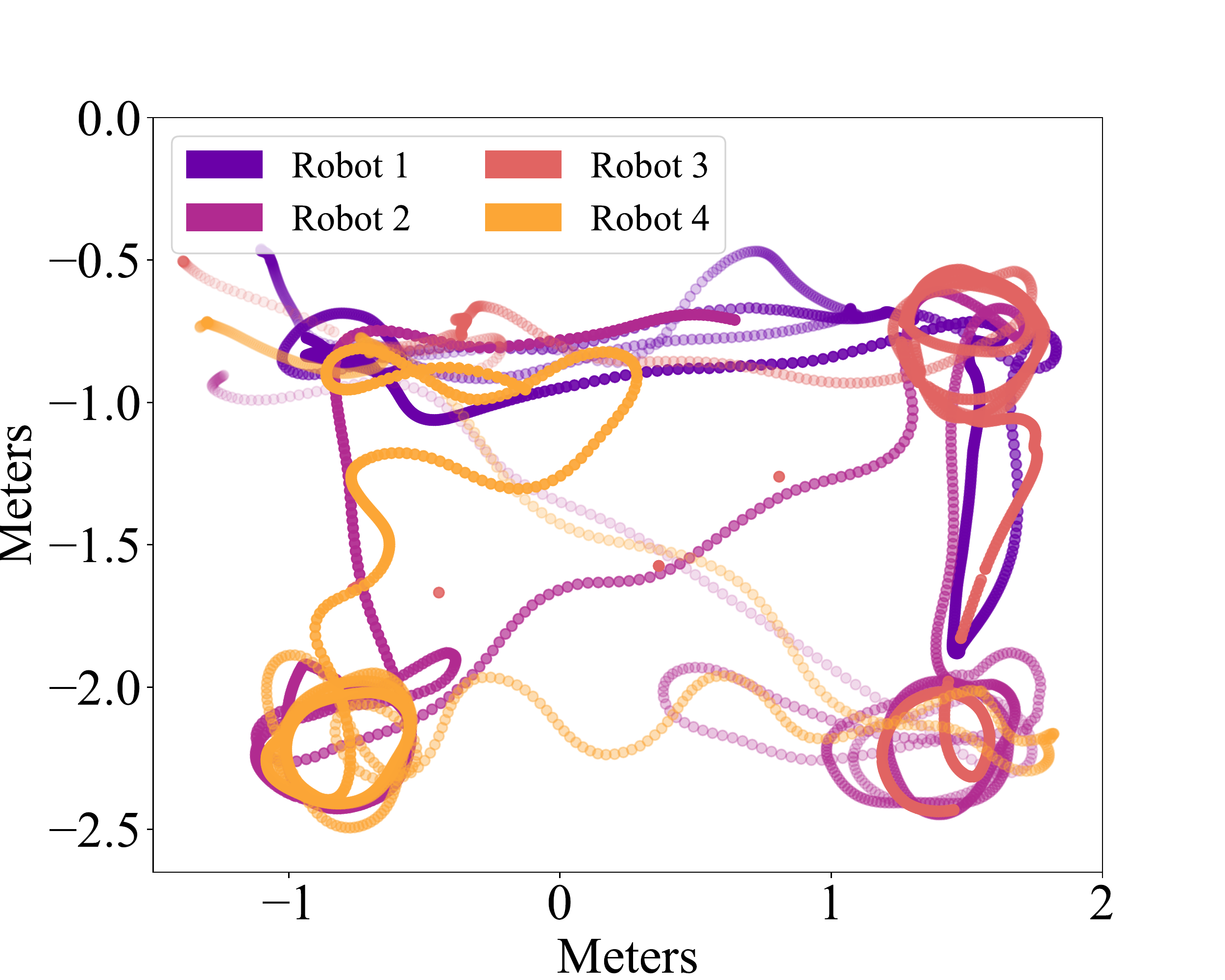}
    \caption{Trajectories with $\beta_i=0$}
    \label{fig:exp_open_loop}
    \end{subfigure} 
    \begin{subfigure}{0.48\textwidth}
    \centering
    \includegraphics[scale=0.23]{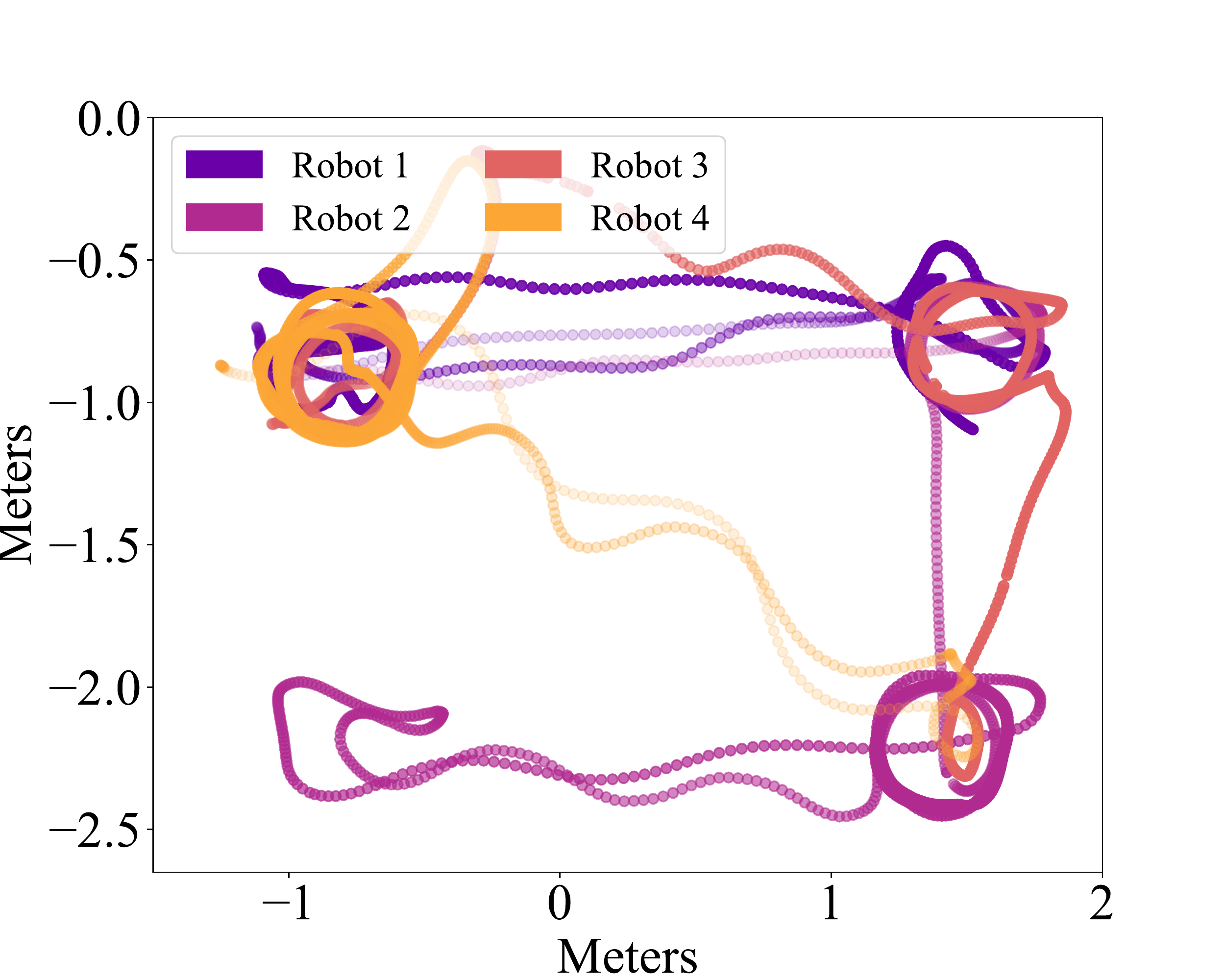}
    \caption{Trajectories with $\beta_i\neq0$}
    \label{fig:exp_closed_loop}
    \end{subfigure}
    \caption{
    Experimental results from a $4$ minute and $4$ mASV
    run, with task graph in 
    Figure \ref{fig:four_task_example} (a). 
    The desired distribution is one agent at each task.
    The test in Figure \ref{fig:exp_figure} (a) had 28 switches, 
    while in Figure 
    \ref{fig:exp_figure} (b) it had 13 switches.
    The colors' intensity reflects time--the lighter the color
    the earlier that path was cruised in the trial.
    } 
    \label{fig:exp_figure}
\end{figure}

\subsection{Experimental Results}
Experimental tests were performed in the multi-robot Coherent Structure Testbed (mCoSTe).
Figure \ref{fig:exp_env} depicts the miniature autono-mous surface vehicles (mASV) in
Figure \ref{fig:exp_env} (b),
and the Multi-Robot Tank (MR tank) in Figure \ref{fig:exp_env} (a).
The MR Tank is a $4.5 \mathrm{m} \times 3.0 \mathrm{m} \times 1.2 \mathrm{m}$
water tank equipped with an OptiTrack motion capture system. 
The mASVs are a differential drive platform, localized
using $120$ Hz motion capture data, communication link 
via XBee, and onboard processing with an Arduino Fio.
Experiments were run for $4$ minutes each, with 
${\boldsymbol X}_0 = [4 ~ 0 ~ 0 ~ 0]'$, and 
$E[{\boldsymbol X}^d] = [ 1 ~ 1 ~ 1 ~ 1]'$,
following the task graph outlined in Figure \ref{fig:four_task_example} (a). 
The parameters used for experimental trials were $k_{ij} = 0.01$ for each nonzero and out-of-diagonal element of
the matrix ${\boldsymbol K}$,
and when variance control was used it was all $\beta_{i} = 0.005$
for $i=1,\dots,4$.
Those parameters were chosen to take into consideration the
time necessary for the mASVs to travel among tasks.
The parameters were converted to individual robot transitions 
by computing the likelihood of transition using equations \eqref{eq:tran_rate_out} and \eqref{eq:tran_rate_in}, as 
well as which task the robot should transition to. 
At each task the mASVs were required
to circle the location with a radius of 0.25m
until a new transition was required.

An instance of an experimental trial for both, 
with and without $\beta_i$, are included in Figure \ref{fig:exp_figure}.
The trajectory of each robot
is represented by colors,
the intensity of the colors is associated with its time
occurrence--when the color is lighter that path was traveled
by the respective mASV
earlier in the experimental trial. 
Figure \ref{fig:exp_figure} (a) has a total of $28$
transitions among tasks, those transitions take place
throughout the experiment as indicated by darker lines between tasks. 
In the case of variance control,
Figure \ref{fig:exp_figure} (b),
there are $13$ 
transitions among tasks, 
notice that many of these transitions happened 
earlier in the experiment. 
%This is clear because the transitions are lighter
%and the circling patterns darker,
%meaning that mASVs reached the steady-state and stayed there with significantly fewer transitions. 
From five experimental trials without variance control the average number of task switches was $25.4$,
and from five experimental trials with variance control
the average number of task switches was $15.6$.
This confirms that in the case where feedback is provided there is a reduced number of switches. 
A video illustrating our results can be found at \url{https://youtu.be/fTsX-5z6BUw}.

During experimentation we observed that sometimes the mASVs would
transition before it could physically reach  
the assigned task,
traveling diagonally
among tasks. 
This was the case in Figure \ref{fig:exp_figure} (b)
where Robot $4$ 
quickly switched to another task
while
in transit. 
While this did not impact the overall results, it is an area of
open interest to achieve desired parameters which
leads to the desired task distribution within the capability of the robots.

\section{Discussion}
\label{conclusion}
In the preceding
sections, we have investigated
the problem of stochastic allocation
of multi-robot teams.
For our main result, we formally
demonstrated that through a particular
structure for the transition rates
of the stochastic jump process model,
we can decouple the first-order moments
from the second-order moments.
Such a decoupling allows us to
introduce additional tuning variables
to regulate the variance of the
desired distribution of robots over
tasks. 
The additional degree of freedom
helps to reduce the impact of individual
robots on the overall team's performance.
Therefore, the
result of this contribution is to expand the viability of top-down stochastic models to include reduced-size robot teams.
In general, the intuition for these models is that they are more accurate as the team size increases.
However, when using our proposed method and directly adjusting the team variance it is 
possible
to increase the top-down stochastic model accuracy for smaller teams.
We argue that such refinement is a stride
towards sparse swarms--even though we are technically
approaching the team size problem and ignoring the
task space size.

Although we have
formally shown the impact of the
additional tuning variables 
as well as
the decoupling between first- and 
second-order moments,
and also
experimentally and numerically 
investigated the influence of
such variables, 
 we did not draw general mathematical expressions to compute their
 values during the design stage.
 In our investigations, we used a
 greedy algorithm that increased the
 variable number based on the desired 
 values of
 variance and 
 final
 distribution.
 In addition, our optimization
 problem to compute the gain matrix
 $\boldsymbol{K}$ for 
 a desired distribution incorporates
 designer knowledge about the 
 robot capabilities through
 the set $\mathcal{K}$, which
 will directly affect the
 robots' mean transition time.
 We plan to formalize this unsolved
 concern in future investigations.

\section{Conclusions and future work}
%\label{conclusion}
We have provided a new structure for the transition rates for 
ensemble robot distributions modeled from a top-down perspective.
As a first result, 
we have demonstrated that such a structure 
leads to uncoupled parameters to adjust the mean and the variance of
desired team's distribution.
Then, based on this finding, we examined simple design strategies
to compute the necessary gains.
This approach provides an efficient ensemble behavior
for relatively small groups.
Finally, physical and numerical experiments were implemented, illustrating
the effectiveness of the method.
Possible future work includes the extension of the strategy for
distributed regulation.
A potential strategy is to perform
distributed estimates of the number of agents
performing each task.
It is also of interest to connect the robot dynamics with the changing 
transition rates. 
One possible approach to bridging those two models is through 
hybrid switching systems.
A formal methodology to design the network structure for a given covariance bound will be considered
in the future.
	
\section*{Acknowledgements}
We gratefully acknowledge the support of ARL DCIST CRA W911NF-17-2-0181,
Office of Naval Research (ONR)
Award No. N00014-22-1-2157,
and the National Defense Science \& Engineering Graduate (NDSEG) Fellowship Program.
%\section{Appendix}
%\label{appendix}
%\input{sections/7_appendix.tex}

\bibliographystyle{spmpsci}
\bibliography{main.bbl}
\end{document}